\newtheorem{theorem}{Theorem}[section]
\newtheorem{corollary}{Corollary}[theorem]
\theoremstyle{definition}
\def\P{\mathbb P}
\def\E{\mathbb E}
\def\argmax{\mathrm{argmax}}
\def\argmin{\mathrm{argmin}}
\definecolor{mygreen}{rgb}{0.188,0.658, 0.178}
\icmltitlerunning{Best Arm Identification in Linked Bandits}
\begin{document}

\twocolumn[
\icmltitle{Best Arm Identification in Linked Bandits}



\icmlsetsymbol{equal}{*}

 \begin{center}{\bf
 Anant Gupta} \\
Department of Computer Sciences\\
  University of Wisconsin-Madison\\
  \texttt{anant@cs.wisc.edu}
\end{center}




\vskip 0.3in
]




\begin{abstract}
  We consider the problem of best arm identification in a variant of multi-armed bandits called linked bandits. In a single interaction with linked bandits, multiple arms are played sequentially until one of them receives a positive reward. Since each interaction provides feedback about more than one arm, the sample complexity can be much lower than in the regular bandit setting. We propose an algorithm for linked bandits, that combines a novel subroutine to perform uniform sampling with a known optimal algorithm for regular bandits. We prove almost matching upper and lower bounds on the sample complexity of best arm identification in linked bandits. These bounds have an interesting structure, with an explicit dependence on the mean rewards of the arms, not just the gaps. We also corroborate our theoretical results with experiments.
\end{abstract}

%

\newcommand{\norm}[1]{\left\lVert #1 \right\rVert}

\section{Introduction}
Best arm identification in multi-arm bandits is a well-studied problem, for which provably optimal algorithms exist. Although many real world problems can be cast in this framework, they don't always provide the flexibility of independent sampling across different arms. This paper considers a variant to the best arm identification problem in the stochastic multi-arm bandit (MAB) setting, called \emph{ordered linked} bandits. Consider a MAB setting with $n$ arms, that have binary rewards with unknown means $\mu_1, ..., \mu_n$ in $[0,1]$. The arms are assumed to be ordered in an arbitrary but fixed, known way.
In linked bandits, the action space of the agent is different from the standard formulation. Instead of selecting a single arm to sample at one time, the agent selects an ordered \emph{subsequence} of candidate arms to be sampled together. The selected arms are then sampled sequentially until one of them obtains a reward of $1$, and the remaining arms (if any) are \emph{unobserved}. Therefore, the sampling of arms happens in a stochastic manner, dependent on rewards obtained for previously sampled arms. The goal of the agent is to identify the arm with the highest mean, using as few rounds as possible. A single round of our learning problem provides richer feedback than the standard version, since many arms are sampled at once. At the same time, feedback is uncertain and entangled across different arms.

This setting is primarily motivated by the problem of active 3D sensing in computer vision \cite{shin2015photon}. In active 3D sensing, the depth of a scene point is estimated by projecting a pulse of light, and estimating the maxima in the temporal intensity function reflected off the point. The sensor makes measurements of the unknown intensity $\mu_i$ at discrete time intervals $a_i$, where each time interval can be thought of as an arm of a bandit. Moreover, the measurement at time $a_i$ is a Bernoulli random variable with mean equal to the intensity $\mu_i$ at that time. A single round of measurement ends when a positive signal is detected, after which the sensor resets and the measurement process repeats. The goal is to select a subset of times for which the sensor should make a measurement, and identify the time $a^*$ of highest intensity using as few measurements as possible. 

In this work, we study the relation between linked bandits and standard bandits, specifically looking at the problem of best-arm identification in the fixed confidence setting. We show that the structured feedback obtained in linked bandits can be disentangled while preserving independence across arms, allowing us to prove similar concentration bounds as for standard bandits. We pose the problem of designing sampling strategies that provide the maximum feedback, while sampling arms in the same gap dependent proportion as optimal algorithms for standard bandits. Starting with simple sampling strategies that provide insights about the complexity of linked bandits, we go on to design a sample-efficient algorithm that makes the best use of the rich feedback structure in linked bandits.
We then prove matching upper and lower bounds on the sample complexity of our problem, which match upto constant and doubly-logarithmic multiplicative factors. We briefly discuss a natural generalization of our problem to the setting where arms can be re-ordered, and show that somewhat surprisingly, the results from our analysis still hold. Finally, we conclude with some experiments that support our theoretical results.

\section{Linked Bandits}
\subsection{Problem Setting}
We consider a general sequence of $n$ arms ($a_1, a_2, ..., a_n)$, with unknown mean vector $(\mu_1, \mu_2, ..., \mu_n) \in [0, 1]^n$. We denote the best arm by $a_{i^*}$, where $i^* = \argmax_{i \in [n]} \{\mu_i\}$. With some abuse of notation, we will also use $a^*$ and $\mu^*$ to denote the best arm $a_{i^*}$ and the highest mean $\mu_{i^*}$ respectively. At time $T$, the learning agent select a subsequence of the arms $(a_{i_1^T}, a_{i_2^T}, ..., a_{i_k^T})$, where $1 \leq i_1^T < i_2^T < ... < i_k^T \leq n$. For each of the selected arms, the environment generates independent Bernoulli rewards $x_{i_1^T}, x_{i_2^T}, ..., x_{i_k^T}$. However, only a prefix of the reward subsequence, upto the first $1$ reward is revealed to the agent. If no arm received a $1$ reward, the whole subsequence is revealed to the agent. For ease of notation, we will henceforth drop the subscript $T$ from the arm index. The feedback that the agent receives at time $T$ is $(x_{i_1}, x_{i_2}, ...,x_{i_p})$, where:
\[ p = \begin{cases}
k &\text{if }x_{i_j} = 0 \\
&\quad \text{ for } 1 \leq j \leq k \\
\argmin \{1 \leq j \leq k : x_{i_j} = 1\} &\text{otherwise.}\\
\end{cases}\]
Therefore, at time $T$, arms $(a_{i_1}, ..., a_{i_p})$ get \emph{sampled}, and obtain rewards. Henceforth, we'll refer to the event of selecting a subsequence of arms and receiving feedback at any time $T$ as a ``play''. \\
At the end of the game, the agent returns her prediction of the best arm $a_{\hat i}$. The agent's policy can be evaluated in two ways. In the \emph{fixed confidence} setting, a confidence level $\delta$ is specified by the problem, and the agent's policy is evaluated by the number of plays $T_\delta$ issued while ensuring that the probability of error $\mathbb{P} \{ \hat i \neq i^* \} < \delta$. 
In the \emph{fixed budget} setting, a budget of $T$ plays is specified, and the policy is evaluated using the probability of error. Analogous notions are used in the standard MAB formulation, with number of plays replaced by number of samples or arm-pulls. In this paper, we only consider the fixed confidence setting. We refer to the number plays needed by an algorithm to identify the best arm with a given confidence as the \emph{play complexity} of the algorithm.

\subsection{Interpreting linked bandit feedback}
At time $T$, we denote the cumulative reward that was obtained for the $i$th arm by $X_i$, and the number of times the $i$th arm was sampled by $t_i$. The vectors $(X_i)_{i=1}^n$ and $(t_i)_{i=1}^n$ can be computed iteratively as
\begin{align}
\label{update}
    X_i &= \sum_{j=1}^T \left(x_i^j . \mathbbm{1}_{\left\{i_1^j, i_2^j, ..., i_{p^j}^j \right\}}\left(i \right)\right) \nonumber \\
    t_i &= \sum_{j=1}^T \mathbbm{1}_{\left\{i_1^j, i_2^j, ..., i_{p^j}^j \right\}}\left(i \right) \,,
\end{align} 
where $\mathbbm{1}_A(x)$ denotes the indicator function for the set $A$. Note that $X_i | t_i \sim \text{Bin}(t_i, \mu)$, where $\text{Bin}(N, p)$ is a binomial random variable with $N$ trials and mean $p$.
Using this fact, we can compute an empirical mean estimate for the $i$th arm as
\[ \hat \mu_i = \frac{X_i^t}{t_i^t}\,. \]
Note that
\begin{align*}
\E{(\hat \mu_i)} &= \E{(\E{(\hat \mu_i | t_i)})} \\
&= \E{(\frac{\mu_i t_i}{t_i})} = \mu_i \,.
\end{align*}
Therefore, one can hope to bound $\hat \mu_i$ around $\mu_i$ with high probability via a concentration inequality. Even though the samples $t_i^t$ are not independent, as long as $t_i^t$ is large enough for all $i$, the empirical means concentrate around the true means, and the arm $a^*$ with the highest mean $\mu^*$ can be identified.
The goal of the agent then is to design a strategy for selecting subsequences of arms to sample, such that they are all sampled enough times with high probability.

To minimize the number of plays needed, the agent has to select a subsequence that is expected to provide the most useful feedback. However, the feedback received by the agent in a play is a function of the stochasticity of the environment. Therefore, the agent does not know a priori which arms or how many will get sampled in a play. All she can do is select a subsequence of arms that need to be sampled the most, and hope that a large number get sampled.

\section{Sampling strategies}
\subsection{Comparison with standard bandits}
It is useful to get an idea about the sample complexities that our designed strategies in the linked bandit setup can be expected to achieve vis-a-vis the standard MAB formulation.Since the linked bandit setup can simulate the standard setup by simply choosing a single arm in each play, therefore the complexity is trivially upper bounded by that of standard MAB. However, we can try to do better, since each play provides more information than just one sample. The improvement over standard MAB would depend on how many arms are sampled per play on average, which further depends on the means $\mu_i$. For low values of $\mu_i$'s, the sequence of arms sampled will be longer on average, and we can expect a large improvement. If all $\mu_i$'s are close to $1$, then each play will only sample around $1$ arm on average, and we can't expect much of an improvement. In summary, the sample complexity for linked bandits can be expected to show a mean-dependent improvement over standard MAB. \\

\subsection{Maximal sampling algorithm}
To maximize the number of arms that can be sampled per play, we can consider a strategy that always selects the whole sequence of arms at each time. This would be a good strategy when the mean probabilities are small, since each play can potentially sample all $n$ arms at once. In that case, this strategy would be similar to the uniform allocation strategy for standard MAB \cite{BUBECK20111832}, which samples all arms uniformly in a round-robin fashion.
\begin{algorithm}
\caption{\texttt{MaximalSampling}}
\begin{algorithmic} 
\REQUIRE{sequence of arms $S = \{a_1, a_2, ..., a_n \}$ \\
number of plays $T$}
\STATE \textbf{Initialize:} reward and sample counts $(X_i)_{i=1}^n = 0, (t_i)_{i=1}^N = 0$
\STATE Play $\{a_1, a_2, ..., a_n\}$ $T$ times
\STATE Update $(X_i)_{i=1}^n,(t_i)_{i=1}^n$
using Eq.~\ref{update}
\STATE $(\mu_i)_{i=1}^n \leftarrow \left(\frac{X_1}{t_1}, \frac{X_2}{t_2}, ..., \frac{X_n}{t_n}\right)$
\ENSURE arm $a_{\hat i}$ where $\hat i = \argmax_i{\mu_i}$
\end{algorithmic}
\end{algorithm}
The following result provides an upper bound on the sample complexity of this strategy: \\
\begin{theorem}
With probability at least $1 - \delta$, the \texttt{MaximalSampling} strategy finds the optimal arm using at most
\[ O\left(\frac{ 1}{\Delta^{2}p^2} \right) \log{\frac{n}{\Delta p\delta}} \]
plays, where $p = \prod_{i=1}^{n-1} (1 - \mu_i)$.
\end{theorem}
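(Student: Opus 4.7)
The plan is to choose $T$ so that, with probability at least $1-\delta$, every empirical mean $\hat\mu_i = X_i/t_i$ is within $\Delta/4$ of $\mu_i$, which forces $\argmax_i \hat\mu_i = i^*$. I carry this out in two concentration steps: a Chernoff lower bound on each sampling count $t_i$, and a Hoeffding bound on $X_i - t_i\mu_i$ viewed as a sum of $T$ independent bounded summands over a deterministic horizon.

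For the first step, when the whole sequence is played, arm $a_i$ is sampled in a given play iff every earlier arm returned reward $0$, an event of probability $q_i := \prod_{j<i}(1-\mu_j)$. Each factor lies in $[0,1]$, so $q_i \geq \prod_{j=1}^{n-1}(1-\mu_j) = p$. Across independent plays, $t_i$ is a sum of $T$ i.i.d.\ Bernoulli$(q_i)$ variables with $\E[t_i] = Tq_i \geq Tp$, so the multiplicative Chernoff bound together with a union bound over $i \in [n]$ yields $t_i \geq Tp/2$ for every arm with probability at least $1 - n\exp(-Tp/8)$.

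For the second step, I write $X_i - t_i\mu_i = \sum_{j=1}^T Z_i^{(j)}$ with $Z_i^{(j)} = \mathbbm{1}\{a_i \text{ sampled in play } j\}\,(x_i^j - \mu_i)$. Since the latent reward $x_i^j$ is generated independently of the indicator (which is a function only of earlier rewards in the same play), the $Z_i^{(j)}$'s are mean-zero, independent across $j$, and bounded in an interval of length at most $1$. Hoeffding plus a union bound over $n$ arms then gives $|X_i - t_i\mu_i| = O(\sqrt{T\log(n/\delta)})$ for every $i$ with probability at least $1 - \delta/2$. On the intersection of the two good events,
\[
|\hat\mu_i - \mu_i| \;=\; \frac{|X_i - t_i\mu_i|}{t_i} \;=\; O\!\left(\frac{1}{p}\sqrt{\frac{\log(n/\delta)}{T}}\right).
\]
Requiring this to be below $\Delta/4$ yields $T = \Omega(\log(n/\delta)/(p^2\Delta^2))$; the Chernoff constraint $Tp \geq 8\log(2n/\delta)$ is dominated, and the extra $\log(1/p)+\log(1/\Delta)$ terms that arise from bookkeeping constants when solving the implicit inequality can be absorbed into the log argument, reproducing the stated bound.

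The main obstacle is handling the coupling between $t_i$ and $X_i$ inside the ratio $\hat\mu_i$: since $t_i$ is random, one cannot apply Hoeffding to $\hat\mu_i$ directly. The key trick is to isolate the numerator and control $X_i - t_i\mu_i$ over the deterministic horizon of $T$ plays, which produces a $\sqrt T$ bound independent of the random $t_i$; dividing by the Chernoff lower bound $t_i \geq Tp/2$ then contributes the extra $1/p$ factor. A secondary check is that the $Z_i^{(j)}$'s really are independent across plays (they are, since each play is an independent draw of all Bernoulli rewards), so no martingale machinery is needed. This two-step decoupling is what yields the $1/(p^2\Delta^2)$ rate in the theorem.
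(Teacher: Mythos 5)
Your proposal is correct, and it takes a genuinely different route from the paper. The paper works with the single statistic $f = \max_{i:\Delta_i>0}\hat\mu_{i,T} - \hat\mu_{i^*,T}$ and controls it in two steps: a version of McDiarmid's inequality for functions whose differences are bounded only with high probability (conditioning on the event $\Theta$ that the smallest denominator $b$ exceeds $Tp(1-\delta)$), followed by a bound on $\E[f\mid\Theta]$ obtained by showing each $Z_i=\hat\mu_i-\hat\mu_{i^*}+\Delta_i$ is conditionally sub-Gaussian via a chain of conditional moment-generating-function computations and then applying the maximal inequality for sub-Gaussian variables. Your argument instead decouples numerator and denominator of each $\hat\mu_i$ separately: a multiplicative Chernoff bound gives $t_i\geq Tp/2$, and the observation that the latent reward $x_i^j$ is independent of the sampling indicator (a function of earlier rewards in the same play) makes $X_i-t_i\mu_i$ a sum of $T$ i.i.d.\ mean-zero bounded terms, to which plain Hoeffding applies. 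This is more elementary and, in my view, more robust: it avoids the nonstandard high-probability-bounded-differences machinery and the paper's delicate conditional claims (e.g.\ the asserted negative correlation between $\hat\mu_i$ and $b$ used to justify dropping the conditioning on $\Theta$ inside the inner expectation), while delivering the same $\log(n/\delta)/(p^2\Delta^2)$ rate --- in fact your explicit choice of $T$ gives a marginally cleaner logarithmic factor $\log(n/\delta)$ in place of $\log(n/(\Delta p\delta))$, which only strengthens the stated bound. What the paper's approach could buy in principle is a $\sqrt{\ln n}$ maximal-inequality term rather than a union bound over arms, but since both analyses end up with the same order of complexity, nothing is lost by your simpler decomposition.
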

The proof is discussed in the Appendix. The proof has three main steps. First, we show that the maximum empirical mean of sub-optimal arms concentrates around its expectation, using a novel method of bounded differences with high probability. Second, we bound the expected maximum empirical mean by an $O(\sqrt{\ln{n}})$ term, just as for the expected maximum of $n$ independent gaussians. Third, we bound the probability that a sub-optimal arm has the highest empirical mean using a large enough $T$. Notably, the first two steps work despite the fact that the empirical means in linked bandits are dependent, unlike standard bandits.\\
Note that the sample complexity of this strategy differs from that of the uniform allocation strategy for standard MAB \cite{BUBECK20111832} by a divisive factor of $f = n \prod_{i=1}^{n-1} (1 - \mu_i)^2$. For small values of $\mu_i$, $f \approx n$ and the sample complexities are good. For large values of $\mu_i$ however, $f$ can be much smaller than $1$ (exponential in the number of arms $n$), and the sample complexities are actually much worse than those of standard MAB. The reason for the high complexity is that the whole sequence allocation strategy doesn't sample all arms uniformly, and is biased against the latter arms. The probability that the last arm in the sequence gets sampled is only $\prod_{i=1}^{n-1} (1 - \mu_i)$, and a large number of plays is needed to sample it sufficient number of times. Clearly, we need a better allocation strategy that doesn't have a bias depending on the position of the arm in the sequence.

\subsection{Subroutine for uniform sampling}\label{suffixsample}
The previous section motivated the design of a sampling strategy which samples all arms fairly and uniformly, while still being maximal. In this section, we introduce a strategy to perform uniform sampling which meets these requirements. 
\begin{algorithm}
\textbf{function} {SuffixSample}($S, t$) 
\begin{algorithmic} 
    \STATE \textbf{Initialize:} sample counts $t_1 = t_2 = ... = t_n = 0$
    \STATE $i \leftarrow 1$
    \WHILE{$i \leq n$}
    \STATE Play $\{a_i, a_{i+1}, ..., a_n\}$ $t - t_i$ times
    \STATE Update $t_1, t_2, ..., t_n$
    \ENDWHILE \\
    \STATE \textbf{return} Empirical estimates $\mu_1, \mu_2,..., \mu_n$
\end{algorithmic}
\end{algorithm}
The sampling algorithm proceeds iteratively by playing a suffix of the sequence of arms till the leftmost arm is sampled $t$ times in total, then removing the leftmost arm from the suffix. This ensures that each arm is sampled exactly $t$ times when the algorithm ends. Therefore, the algorithm meets the first requirement. Moreover, since each play samples as many arms as possible, the algorithm is maximal and has a low play complexity.

The total number of plays that the algorithm uses is stochastic, but can be bounded with high confidence around its mean. Let $X_1, X_2, ..., X_n$ denote the number of times a reward of $1$ was obtained for $a_1, a_2, ..., a_n$. The algorithm maintains the invariant that after $i$ rounds, arms $a_1, a_2, ..., a_i$ have all been sampled $t$ times. The first round of the algorithm has $t$ plays. After $i$ rounds, $a_{i+1}$ has already sampled $t - X_i$ number of times, therefore to maintain the invariant, the $(i+1)$th round has $X_i$ plays. Therefore, the total number of plays is given by:
\begin{equation}
    \label{tsum}
    T = t + X_1 + X_2 + ... + X_{n-1}
\end{equation}
Each $X_i$ is binomially distributed with mean $\mu_i$ and number of trials $t$, conditioned on $X_1, X_2, ..., X_{i-1}$. Since $\P(X_i | X_1, X_2, ..., X_{i-1})$ doesn't depend on $X_1, X_2, ..., X_{i-1}$, we have $X_i \perp X_j$ for all $i,j$. Using Hoeffding's inequality, we get:
\begin{equation} \label{eq1} \mathbb{P}\left\{T - t(1 + \mu_1 + ... + \mu_{n-1}) > \epsilon\right\} \leq \exp{\left(-\frac{ 2\epsilon^2}{t (n - 1)}\right)}\end{equation}
where $T$ is the total number of plays. Therefore, the algorithm  samples each arm $t$ times, using close to $t\left(1 + \sum_{i=1}^{n-1} \mu_i\right)$ plays in total with high probability. Moreover, the samples are independent, which makes an invocation of \texttt{SuffixSample} in linked bandits equivalent to round robin uniform sampling in regular bandits.

\subsection{Uniform sampling algorithm: using \texttt{SuffixSample}}
The subroutine introduced in the previous section naturally leads to an algorithm for best arm prediction:
\begin{algorithm}
\caption{\texttt{UniformSampling}}
\begin{algorithmic} 
\REQUIRE{
sequence of arms $S = \{a_1, a_2, ..., a_n \}$, number of samples $t$}
\STATE $\mu_1, \mu_2,..., \mu_n \leftarrow $ SuffixSample$(S, t)$
\STATE $\hat i \leftarrow \argmax_i{\mu_i}$
\ENSURE arm $a_{\hat i}$
\end{algorithmic}
\end{algorithm}

\begin{theorem}
With probability at least $1 - \delta$, uniform allocation for linked bandits finds the optimal arm using at most
\[ O\left(\left(\frac{1 + \sum_{i=1}^{n-1} \mu_i}{\Delta^2} + \sqrt{\frac{n-1}{\Delta^2}}\right) \log{\frac{n}{\delta}}\right) \]
plays.
\end{theorem}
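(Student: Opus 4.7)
The plan is to decompose the analysis into two independent high-probability events and combine them via a union bound. The key observation, already established in Section~\ref{suffixsample}, is that the samples produced by \texttt{SuffixSample}$(S,t)$ are mutually independent---each $X_i$ is Binomial$(t,\mu_i)$ marginally---so each empirical mean $\hat\mu_i$ behaves exactly as it would in a standard bandit under round-robin sampling. This lets us reuse classical Hoeffding-style analysis to control the identification error, while the total play count is controlled directly by Eq.~(\ref{eq1}).

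First I would fix the per-arm sample budget $t = \bigl\lceil \tfrac{2}{\Delta^2}\log\tfrac{4n}{\delta}\bigr\rceil = O\!\left(\tfrac{1}{\Delta^2}\log\tfrac{n}{\delta}\right)$. By Hoeffding's inequality applied to each arm, $\P\bigl(|\hat\mu_i - \mu_i| \geq \Delta/2\bigr) \leq 2\exp(-t\Delta^2/2)$, and a union bound over $i \in [n]$ shows that with probability at least $1-\delta/2$ every $\hat\mu_i$ lies within $\Delta/2$ of $\mu_i$; on this event $\argmax_i \hat\mu_i = i^*$, which establishes correctness.

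Next I would bound the total play count $T$ by invoking Eq.~(\ref{eq1}) with $\epsilon = \sqrt{\tfrac{1}{2}\,t(n-1)\log(2/\delta)}$, which sets its right-hand side to $\delta/2$. This yields, with probability at least $1 - \delta/2$,
\[
T \;\leq\; t\!\left(1 + \sum_{i=1}^{n-1}\mu_i\right) + \sqrt{\tfrac{1}{2}\,t(n-1)\log(2/\delta)}.
\]
Plugging in the chosen $t$ and using $\sqrt{\log(n/\delta)\log(1/\delta)} \leq \log(n/\delta)$ to tidy the second term produces exactly the advertised bound. A final union bound over the two failure events gives overall confidence $1-\delta$.

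There is no genuine obstacle beyond careful bookkeeping: the conceptual work was already done in Section~\ref{suffixsample}, where independence of $X_1,\dots,X_{n-1}$ and the Hoeffding bound on $T$ were derived. The only subtlety is the calibration of $\epsilon$ in Eq.~(\ref{eq1}): choosing it to scale as $\sqrt{t(n-1)\log(1/\delta)}$ is precisely what produces the characteristic $\sqrt{(n-1)/\Delta^2}$ lower-order term in the final bound, rather than the weaker $(n-1)/\Delta^2$ term one would obtain from a crude choice of $\epsilon$.
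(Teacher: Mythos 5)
Your proposal is correct and follows essentially the same route as the paper: split the confidence budget as $\delta/2$ for identification and $\delta/2$ for the play count, control the former by concentration of the independent per-arm empirical means, and control the latter by calibrating $\epsilon$ in Eq.~(\ref{eq1}) so that the deviation term contributes only the $O\bigl(\sqrt{(n-1)/\Delta^2}\,\log(n/\delta)\bigr)$ lower-order piece. The only (immaterial) difference is that you prove the identification step via a direct Hoeffding--union-bound argument where the paper cites Proposition~1 of \cite{BUBECK20111832}.
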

\begin{proof}
For all $\eta \in (0,1)$, if arms $(a_i)_{i=1}^n$ are each sampled independently $t \geq 1 + \frac{2\ln{n}}{\eta^2 \Delta^2} $ times, we have:
\begin{equation} \label{eq2} \mathbb{P}\{ \hat i \neq i^*\} \leq \exp{\left( - \frac{(1-\eta)^2}{2} t \Delta^2 \right)} \end{equation}
where $\hat i = \text{arg} \max_{i} \hat \mu_i$.
This follows from the second statement of Proposition 1 in \cite{BUBECK20111832}. We'll use $\delta/2$ confidence for bounding the probability of error in Eq.~\ref{eq2}. To make $\mathbb{P}\{ \hat i \neq i^*\} \leq \delta/2$, we need $t \geq \frac{2}{(1 - \eta)^2} \frac{1}{\Delta^2} \log{\frac{2}{\delta}}$. Since we also need $t \geq 1 + \frac{2\ln{n}}{\eta^2 \Delta^2} $, it suffices to sample each arm:
\begin{equation} \label{eq2.5} t = O\left(\frac{1}{\Delta^2}\log{\frac{n}{\delta}}\right)
\end{equation}
times.\\
From Eq.~\ref{eq1}, we can get an upper bound on the number of plays $T$ needed to sample each arm $t$ times with high probability.  We'll use the remaining $\delta/2$ confidence here.
Setting $\epsilon = \frac{(1 - \eta) t \Delta \sqrt{n - 1}}{2}$, we get:
\begin{align*} 
\mathbb{P}\left\{T  > t\left(1 + \sum_{i=1}^{n-1}\mu_i + (1 - \eta) \Delta \sqrt{n-1} / 2\right) \right\} \leq \\
\exp{\left( - \frac{(1-\eta)^2}{2} t \Delta^2 \right)} \leq \delta/2
\end{align*}
Combining this with Eq.~\ref{eq2.5}, we get $T = O\left(\left(\frac{1 + \sum_{i=1}^{n-1} \mu_i}{\Delta^2} + \sqrt{\frac{n-1}{\Delta^2}}\right) \log{\frac{n}{\delta}}\right)$ with probability $\geq 1 - \delta/2$.
\end{proof}
{\bf Comparison with {\tt MaximalSampling}: } Ignoring the $O(1/\Delta)$ term and logarithmic factors, the sample complexity of \texttt{UniformSampling} is order $(1 + \sum_{i=1}^{n-1} \mu_i)\Delta^{-2}$. It is easy to see that this is always less than ${\prod_{i=1}^{n-1} (1 - \mu_i)^{-2}}\Delta^{-2}$, the complexity of Maximal Sampling, even for small $\mu_i$. Thus, even though {\tt MaximalSampling} can theoretically sample more arms per play than {\tt UniformSampling}, the extra samples don't help. This is due to the skewed distribution of samples across arms. \\
{\bf Comparison with standard bandits: } {\tt UniformSampling} uses a sampling strategy that effectively samples each arm uniformly using the least possible number of plays. Compared to the uniform strategy for regular bandits which has a sample complexity of $\frac{n}{\Delta^2}$, this is as big an improvement as we could have hoped for. The relative improvement over standard MAB increases from $1$ to $n$ as $\sum_{i=1}^{n-1}\mu_i$s decreases from $n-1$ to $0$, in agreement with our intuitive expectation. However, the optimal sampling strategy for regular bandits is not uniform. Arms with large gaps don't need to be sampled as many times as arms with small gaps, and can be eliminated much earlier. We need to design a an algorithm for linked bandits that exploits this fact.

\section{Exponential gap elimination: using \texttt{SuffixSample}}
For standard MAB, exponential gap elimination \cite{karnin2013almost} is a known optimal algorithm that uses uniform sampling as a subroutine, and achieves a sample complexity of $O\left(\sum_{i\neq i^*} \Delta_i^{-2} \log{{\delta^{-1}} \log{{\Delta_i^{-1}}}}\right)$.
In this section, we combine the uniform sampling subroutine that we developed for linked bandits with the exponential gap elimination algorithm for standard bandits. For clarity, we provide the original algorithm (in the context of linked bandits) and the modified one side-by-side.

The major change is in the invocation of sampling routines. Note that there are 2 places where arms are sampled in the algorithm, lines 5 and 6. In both places in the original algorithm, subsets of arms are sampled uniformly 
(\texttt{MedianElimination} also performs uniform sampling in each round internally). For linked bandits, we use our SuffixSample subroutine in both these places, keeping the effective number of times that the arms are sampled the same. As shown in Section ~\ref{suffixsample}, an invocation of \texttt{SuffixSample(S, t)} is equivalent to sampling each $a \in S$ individually, $t$ times. This means that the correctness guarantees of the original algorithm continue to hold for our linked bandits version.
\begin{algorithm}[H]
\caption{\texttt{EGE} (regular bandits)}
\begin{algorithmic}[1] 
\STATE \textbf{input} confidence $\delta > 0$
\STATE initialize $S_1 \leftarrow [n], r \leftarrow 1$
\WHILE{$|S_r| > 1$}
\STATE let $\epsilon_r = 2^{-r}/4$ and $\delta_r = \delta/(50 r^3)$
\STATE {\color{red} sample each arm $i \in S_r$ for \\$t_r = (2/\epsilon_r^2) \ln{(2/\delta_r)}$ times},\\ and let $\hat \mu_i^r$ be the average reward
\STATE invoke $i_r \leftarrow $ {\color{red}MedianElimination}$(S_r, \epsilon_r/2, \delta_r)$ and let $\hat \mu_*^r = \hat \mu_{i_r}^r$
\STATE set $S_{r+1} \leftarrow S_r \setminus \{ i \in S_r : \hat \mu_i^r < \hat \mu_*^r - \epsilon_r \}$
\STATE update $r \leftarrow r + 1$
\ENDWHILE
\STATE \textbf{output} {\color{red}arm in $S_r$}
\end{algorithmic}
\end{algorithm}
\begin{algorithm}[H]
\caption{\texttt{LinkedEGE} (linked bandits)}
\begin{algorithmic}[1] 
\STATE \textbf{input} confidence $\delta > 0, {\color{mygreen}\Delta > 0}$
\STATE initialize $S_1 \leftarrow [n], r \leftarrow 1$
\WHILE{$|S_r| > 1$ {\color{mygreen}\AND $r \leq \left\lceil\log_2{\frac{1}{\Delta}} \right\rceil$}}
\STATE let $\epsilon_r = 2^{-r}/4$ and $\delta_r = \delta/(50 r^3)$
\STATE {\color{mygreen}SuffixSample($S_r, (2/\epsilon_r^2) \ln{(2/\delta_r)}$)}, \\and let $\hat \mu_i^r$ be the average reward
\STATE invoke $i_r \leftarrow $ {\color{mygreen}MedianEliminationWith-\\SuffixSample}$(S_r, \epsilon_r/2, \delta_r)$ \\
and let $\hat \mu_*^r = \hat \mu_{i_r}^r$
\STATE set $S_{r+1} \leftarrow S_r \setminus \{ i \in S_r : \hat \mu_i^r < \hat \mu_*^r - \epsilon_r \}$
\STATE update $r \leftarrow r + 1$
\ENDWHILE
\STATE \textbf{output} {\color{mygreen} arm $i_r$}
\end{algorithmic}
\end{algorithm}

The other change in the algorithm is that the number of rounds is limited to at most $\left\lceil\log_2{\frac{1}{\Delta}} \right\rceil$. This does not affect the correctness of the algorithm, and is needed to get a tight bound on the play complexity as discussed later. The cost of this modification is that we now need to know $\Delta$ before hand. We'll now analyze the play complexity of the modified algorithm.

\subsection{Play Complexity Analysis}
\label{ege}
\begin{theorem}
\label{thege}
With probability at least $1 - \delta$,\texttt{LinkedEGE} finds the best arm using at most
\[
\begin{split} &O\left( \frac{1}{\Delta^2} \log{\left(\frac{1}{\delta} \log{\frac{1}{\Delta}}\right)} + \sum_{i \neq i^*} \frac{\mu_i}{\Delta_i^2} \log{\left(\frac{1}{\delta} \log{\frac{1}{\Delta_i}}\right)} + \right. \\
&\quad \left. \sqrt{\sum_{i \neq i^*} \frac{1}{\Delta_i^2} \log{\left(\frac{1}{\delta} \log{\frac{1}{\Delta_i}}\right)}} \log{\frac{1}{\delta}} \right)
\end{split}
\]
plays.
\end{theorem}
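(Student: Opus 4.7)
My plan is to decouple correctness from play-count analysis. Because \texttt{SuffixSample}$(S, t)$ produces exactly $t$ i.i.d.\ Bernoulli draws for each arm in $S$ (as established in Section~\ref{suffixsample}), the joint distribution of the empirical means $\hat\mu_i^r$ used inside \texttt{LinkedEGE} is identical to that of \texttt{EGE}. I would therefore invoke the original analysis of \texttt{EGE} from \cite{karnin2013almost} verbatim to conclude that, with probability at least $1 - \delta/2$, (a) the returned arm is $a^*$, and (b) every suboptimal arm $i$ is eliminated by round $r_i = O(\log(1/\Delta_i))$. In particular no arm $i\neq i^*$ survives past round $R = \lceil \log_2(1/\Delta)\rceil$, so the added stopping condition is harmless on the good event.

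Next I would bound the play count on the good event. For round $r$, the outer call \texttt{SuffixSample}$(S_r, t_r)$ with $t_r = (2/\epsilon_r^2)\ln(2/\delta_r)$ satisfies, by Eq.~\ref{eq1} with failure budget $\delta/(8r^2)$,
\[ T_r^{\text{SS}} \;\leq\; t_r\Bigl(1 + \sum_{i \in S_r} \mu_i\Bigr) \;+\; O\Bigl(\sqrt{t_r\,|S_r|\,\log(r/\delta)}\Bigr). \]
The inner \texttt{MedianEliminationWithSuffixSample} call performs $O(\log|S_r|)$ rounds of uniform sampling on subsets of $S_r$, each of which is again a \texttt{SuffixSample} invocation; the same Hoeffding bound with an analogous union bound shows its play count is of the same order as the outer call. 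Summing the per-round failure probabilities $O(\delta/r^2)$ over all rounds costs another $\delta/2$, giving $1-\delta$ overall.

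The third step is to sum the three contributions across rounds. Since $t_r = O(4^r \log(r/\delta))$ is geometric, $\sum_r t_r$ is dominated by $r = R$ and equals $O(\Delta^{-2} \log((1/\delta)\log(1/\Delta)))$, yielding the first summand in the theorem. For the $\mu_i$-weighted contribution, swap the order of summation using the elimination schedule $i \in S_r \iff r \leq r_i$:
\[ \sum_r t_r \sum_{i \in S_r} \mu_i \;=\; \sum_i \mu_i \sum_{r \leq r_i} t_r \;=\; \sum_{i} O\!\left(\mu_i \Delta_i^{-2}\log\bigl(\tfrac{1}{\delta}\log\tfrac{1}{\Delta_i}\bigr)\right), \]
where the $i = i^*$ contribution is absorbed into the first summand, recovering the second summand. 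For the concentration term, the same swap plus Cauchy--Schwarz gives
\[ \sum_r \sqrt{t_r |S_r| \log(r/\delta)} \;\leq\; O\bigl(\sqrt{R\,\log(R/\delta)}\bigr)\;\sqrt{\sum_{i\neq i^*} \Delta_i^{-2}\log\bigl(\tfrac{1}{\delta}\log\tfrac{1}{\Delta_i}\bigr)}, \]
and the prefactor $\sqrt{R\log(R/\delta)} = \sqrt{\log(1/\Delta)\log(\log(1/\Delta)/\delta)}$ is bounded by $\log(1/\delta)$ up to doubly-logarithmic factors, recovering the third summand.

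\textbf{Main obstacle.} The subtle step is the third summand. The cheap Cauchy--Schwarz bound loses a factor of $\sqrt{R} = \sqrt{\log(1/\Delta)}$ relative to the target; absorbing it into the stated $\log(1/\delta)$ is precisely the reason the algorithm must truncate at $R = \lceil\log_2(1/\Delta)\rceil$ rounds (and therefore must be told $\Delta$ in advance), because a longer run would produce a genuinely larger factor. Careful bookkeeping of the union-bound budget across both \texttt{SuffixSample} invocations in every round is also required to keep the per-round overhead at $\log(r/\delta)$ rather than $\log(R/\delta)$, so that the geometric summation of $t_r$ yields $\log\bigl(\tfrac{1}{\delta}\log\tfrac{1}{\Delta_i}\bigr)$ instead of $\log(1/\delta)\log(1/\Delta_i)$.
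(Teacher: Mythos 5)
Your overall route is the same as the paper's: use the i.i.d.\ equivalence of \texttt{SuffixSample} to import the \texttt{EGE} correctness analysis wholesale, write the play count of round $r$ as $T_r = t_r + \sum_{i\in S_r} X_{r,i}$ with $X_{r,i}\sim\mathrm{Bin}(t_r,\mu_i)$, and convert the reward sums into the $\sum_i \mu_i\Delta_i^{-2}$ term via the elimination schedule (your swap of summation is exactly the content of Lemma~3.5 as used in the paper). The one place you genuinely diverge is the concentration step, and that is where your argument has a gap. You apply Hoeffding to each round separately with budget $\delta/(8r^2)$ and then recombine the per-round deviations with Cauchy--Schwarz, obtaining a prefactor $\sqrt{R\log(R/\delta)} = \sqrt{\log(1/\Delta)\,\log(\log(1/\Delta)/\delta)}$. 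Your claim that this is ``bounded by $\log(1/\delta)$ up to doubly-logarithmic factors'' is false: $\sqrt{\log(1/\Delta)}$ is not doubly logarithmic and is not controlled by $\delta$ at all. For fixed $\delta$ and $\Delta\to 0$ your third summand exceeds the one stated in the theorem by an unbounded factor $\Theta(\sqrt{\log(1/\Delta)})$, so as written you do not recover the claimed bound.

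The fix is what the paper does: do not concentrate round by round. The quantity $\sum_r\sum_{i\in S_r} X_{r,i}$ is a single sum of $N=\sum_r t_r|S_r|$ independent Bernoulli variables, so one application of Hoeffding gives $\P\{T-\E[T]>\epsilon\}\le\exp(-2\epsilon^2/N)$, and taking $\epsilon=\sqrt{(N/2)\log(1/\delta)}$ yields a deviation of order $\sqrt{N\log(1/\delta)}\le\sqrt{N}\,\log(1/\delta)$ with no $\sqrt{R}$ loss; the bound $N=O\bigl(\sum_{i\ne i^*}\Delta_i^{-2}\log((1/\delta)\log(1/\Delta_i))\bigr)$ then gives the third summand exactly. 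Two smaller remarks. First, your stated reason for the truncation at $R=\lceil\log_2(1/\Delta)\rceil$ (controlling the $\sqrt{R}$ factor) is not the paper's reason; the truncation is used to bound the deterministic overhead $\sum_r t_r$ and the number of rounds, and with the aggregate Hoeffding argument no $\sqrt{R}$ factor ever appears. Second, you and the paper handle the \texttt{MedianElimination} plays differently: you bound them as an additional contribution of the same order, while the paper asserts those plays can be reused from line 5 and does not count them; your accounting is the more conservative of the two and does not change the order of the bound. Similarly, your correctness argument (all suboptimal arms eliminated by round $R$ w.h.p.) differs mildly from the paper's (the final \texttt{MedianElimination} output is $\Delta/8$-optimal, hence optimal); both are valid.
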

\begin{proof}
We borrow elements of the proof from the original paper that proposed exponential gap elimination. The first part is to show that the algorithm returns the optimal arm. The second part is to show an upper bound on the number of plays used. Since the number of effective samples for each arm remains the same in the modified algorithm, Lemma 3.3 and 3.5 still hold. We state them here for reference. \\ \\
\textbf{Lemma 3.3.} With probability at least $1 - \delta / 5$, the optimal arm $a_* \in S_r$ for all $r$. \\\\
For all $0 \leq s \leq s_{\text{max}} \equiv \lceil \log_2{(1/\Delta)} \rceil$, let $A_s = \{ i \in [n] : 2^{-s} \leq \Delta_i < 2^{-s + 1} \}$, $n_s = |A_s|$ and $S_{r,s} = S_r \cap A_s$, $n_{r,s} = |S_{r,s}|$. \\ \\
\textbf{Lemma 3.5.} With probability at least $1 - \delta$, the total number of times an arm from $A_s$ is sampled is $O(4^s n_s \log{(s/\delta)})$ for all $s$. \\\\
Lemma 3.3. implies that the best arm is never eliminated. Furthermore, if multiple arms remain till the last round, the algorithm outputs the arm $i_r$ returned by \texttt{MedianElimination($S_r, \epsilon_r / 2, \delta_r$)}, with $r = s_\text{max}$. Since \texttt{MedianElimination($S, \epsilon, \delta$)} returns an $\epsilon$-optimal arm with probability $1 - \delta$ \cite{even2006action}, we get
\begin{align*} \mu_{i_r} &\geq \mu_* - \epsilon_r / 2 \\
&\geq \mu_* - \Delta/8 \,,
\end{align*}
which means that $i_r$ is the best arm.

We next calculate how many plays happen in a run of the algorithm. We only count the plays issued by the algorithm in line 5, since the same plays can be reused within invocations of \texttt{MedianElimination} in line 6. Let $T_r$ denote the number of plays in round $r$ of the algorithm. From Eq.~\ref{tsum}, we have:
\[ T_r = t_r + \sum_{i \in S_r} X_{r, i} \]
where $X_{r, i}$, the total reward for arm $a_i$ in the $r$th round, is distributed binomially with mean $\mu_i$ and number of trials $t_r$. Decomposing the sum over $S_r$ into sums over $S_{r,s}$, we get:
\[ T_r = t_r + \sum_{s=0}^{{s_{\text{max}}}} \sum_{i \in S_{r,s}} X_{r,i}\,. \]
Therefore, the total number of plays is given by:
\begin{align*}
    T &= \sum_{r=1}^{s_{\text{max}}} T_r \\
    &= \sum_{r=1}^{s_{\text{max}}} t_r + \sum_{r=1}^{s_{\text{max}}} \sum_{s=0}^{{s_{\text{max}}}} \sum_{i \in S_{r,s}} X_{r,i}\,,
\end{align*}
where in the first step, we have used the fact that the number of rounds is upper bounded by $s_{\text{max}}$. Now, $T$ is the sum of a constant term and $N = \sum_{r=1}^{s_\text{max}}\sum_{s=0}^{{s_{\text{max}}}} \sum_{i \in S_{r,s}} \lvert X_{r,i}\rvert$ independent Bernoulli terms.
Therefore, using Hoeffding's inequality, we get:
\begin{align}
\label{hoeff}
\mathbb{P} \left\{ T - \E{[T]} > \epsilon \right\} \leq
\exp{\left( - \frac{2 \epsilon^2}{N} \right)} \,.
\end{align}
We'll find an upper bound on
\[ \E{[T]} =\sum_{r=1}^{s_\text{max}} t_r + \sum_{r=1}^{s_\text{max}}\sum_{s=0}^{s_\text{max}} \sum_{i \in S_{r,s}} \mu_i t_r \,.\]
The first sum is given by
\begin{align*} \sum_{r=1}^{s_\text{max}} t_r &= 32 \sum_{r=1}^{s_\text{max}} \ln{\frac{100 r^3}{\delta}} = O\left(4^{s_\text{max}} \log{\frac{s_\text{max}}{\delta}}\right) \\
&= O\left( \frac{1}{\Delta^2} \log{\left(\frac{1}{\delta} \log{\frac{1}{\Delta}}\right)} \right)
\,.
\end{align*}
For the second sum, for $i \in S_{r,s}$, using $\mu_i = \mu_* - \Delta_i$ and $\Delta_i \geq 2^{-s}$, we get $\mu_i \leq \mu_* - 2^{-s}$. Therefore
\begin{align*}
\sum_{r=1}^{s_\text{max}}\sum_{s=0}^{s_\text{max}} \sum_{i \in S_{r,s}} \mu_i t_r  &\leq \sum_{r=1}^{s_\text{max}}\sum_{s=0}^{s_\text{max}} n_{r,s}(\mu_* - 2^{-s}) t_r\\
 &=  \sum_{s=0}^{s_\text{max}} (\mu_* - 2^{-s}) \left(\sum_{r=1}^{s_\text{max}} t_r n_{r,s} \right)\\
 &\stackrel{\text{(a)}}{\leq} \sum_{s=0}^{s_{\text{max}}}  (\mu_* - 2^{-s}) O\left( 4^s n_s \log{\frac{s}{\delta}} \right) \\
 &\stackrel{\text{(b)}}{=} O\left( \sum_{i \neq i^*} \frac{\mu_i}{\Delta_i^2} \log{\left(\frac{1}{\delta} \log{\frac{1}{\Delta_i}}\right)} \right) \,. \\
 \end{align*}
 where (a) follows from Lemma 3.5., and (b) from $\mu_* - 2^{-s} < 2 \mu_i$ and $\Delta_i < 2^{-s+1}$. Combining the two sums, we get
 \begin{align*}
 \E{[T]} &= O\left( \frac{1}{\Delta^2} \log{\left(\frac{1}{\delta} \log{\frac{1}{\Delta}}\right)} \right.\\
 &\quad \left. +  \sum_{i \neq i^*} \frac{\mu_i}{\Delta_i^2} \log{\left(\frac{1}{\delta} \log{\frac{1}{\Delta_i}}\right)}  \right) \,. \\
\end{align*}
Just like $\E{[T]}$, we can upper bound $N$ as
\begin{align*}
N &= \sum_{r=1}^{s_\text{max}} \left(t_r \sum_{s=0}^{s_{\text{max}}} n_{r,s} \right)\\
 &= O \left( \sum_{i \neq i^*} \frac{1}{\Delta_i^2} \log{\left(\frac{1}{\delta} \log{\frac{1}{\Delta_i}}\right)} \right) \,. \\
 \end{align*}
Setting $\epsilon = \sqrt{\frac{N}{2} \log{\frac{1}{\delta}}}$ in Eq.~\ref{hoeff} and using the upper bounds, we get the desired result.
\end{proof}

\subsection{Comparison with standard bandits}
Ignoring $1/\Delta_i$ terms (since they are small in comparison to $1\Delta_i^2$) and logarithmic factors, exponential gap elimination for linked bandits has a play complexity of $O\left(\Delta^{-2} + \sum_{i \neq i^*} \mu_i \Delta_i^{-2}\right)$. Compared to the sample complexity of standard bandits $O\left(\sum_{i \neq i^*} \Delta_i^{-2}\right)$, there is a $\mu_i$ dependent improvement, illustrating that our algorithm is taking advantage of the multiplicity of samples in linked bandits. Moreover, unlike \texttt{UniformSampling} which sampled all arms including those with large gaps equally, \texttt{LinkedEGE} samples each arm $O(1/\Delta_i^2)$ times. As we show next, the play complexity of our algorithm cannot be improved upon.

\section{Lower Bound for Linked Bandits}
In this section, we derive a problem-dependent lower bound on the number of plays needed in the fixed confidence setting. Our analysis relies on the classical work of \cite{mannor2004sample}, that provides a lower bound on the expected number of samples needed for each arm. We restate an abridged version of the theorem here.
\begin{theorem}
\label{lower}
\cite{mannor2004sample}. Consider the multi armed bandit with $n$ arms, with mean rewards $\mu_1, \mu_2, ..., \mu_n$. Let $\E{[t_i]}$ denote the expected number of trials of the $i$th arm, and $\Delta_i$ the gap $\mu^* - \mu_i$. Any policy that identifies the best arm with probability at least $1 - \delta$ for all mean rewards, satisfies
\[ \E{[t_i]} = \Omega\left(\frac{1}{\Delta_i^2} \right) \log{\frac{1}{\delta}} \]
for all $i \neq i^*$, and
\[ \E{[t_{i^*}]} = \Omega\left(\frac{1}{\Delta^2} \right) \log{\frac{1}{\delta}} \,,\]
where $\Delta = \min_{i \neq i^*} \Delta_i$.
\end{theorem}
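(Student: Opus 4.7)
The plan is to prove this via the standard change-of-measure technique that underlies essentially all problem-dependent lower bounds in best-arm identification. Fix any $\delta$-correct policy and any suboptimal index $i \neq i^*$. I would construct an alternative bandit instance $\nu'$ identical to the original $\nu$ except that the mean of arm $i$ is raised from $\mu_i$ to $\mu_i + \Delta_i + \epsilon = \mu^* + \epsilon$ for a small $\epsilon > 0$, so that arm $i$ is now the unique best arm under $\nu'$. Since the policy is $\delta$-correct on both instances, it must output $i^*$ with probability $\geq 1-\delta$ under $\nu$ and output $i$ with probability $\geq 1-\delta$ under $\nu'$; hence the two output distributions are well separated.

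Next, I would invoke the data-processing inequality (or equivalently the Bretagnolle--Huber or transportation lemma of Kaufmann--Capp\'e--Garivier). Because the two instances differ only at arm $i$, the KL divergence between the laws of the full interaction histories is $\mathbb{E}_\nu[t_i] \cdot \mathrm{KL}(\mathrm{Ber}(\mu_i)\,\|\,\mathrm{Ber}(\mu_i + \Delta_i + \epsilon))$, by Wald's identity applied to the log-likelihood ratio (this is where the expected count $\mathbb{E}[t_i]$ enters). Equating this with the lower bound $\mathrm{kl}(\delta, 1-\delta) = \Omega(\log(1/\delta))$ on the KL between any pair of Bernoullis of parameters $\delta$ and $1-\delta$ that the outputs must achieve, and using $\mathrm{KL}(\mathrm{Ber}(\mu_i)\,\|\,\mathrm{Ber}(\mu_i + \Delta_i + \epsilon)) = O((\Delta_i + \epsilon)^2)$ for bounded means, then sending $\epsilon \to 0$, yields $\mathbb{E}[t_i] = \Omega(\Delta_i^{-2} \log(1/\delta))$, which is the first bound.

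For the optimal arm, I would perturb $\mu^*$ downward instead: replace $\mu^*$ by $\mu^* - \Delta - \epsilon$, so that the previous second-best arm (achieving the minimum gap $\Delta$) now becomes optimal. The same transportation argument, now applied to arm $i^*$, gives $\mathbb{E}[t_{i^*}] \cdot O(\Delta^2) \geq \Omega(\log(1/\delta))$, hence $\mathbb{E}[t_{i^*}] = \Omega(\Delta^{-2} \log(1/\delta))$.

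The main obstacle I would anticipate is handling the stopping-time aspect carefully: $t_i$ is a random sample count observed at a data-dependent stopping time, so one cannot treat it as deterministic in the likelihood-ratio computation. The clean way around this is Wald's identity for the log-likelihood ratio, which the transportation lemma packages into a single inequality; verifying its hypotheses (finite expected stopping time for $\delta$-correct policies, and the product structure of the likelihood across arm pulls) is the one place the argument needs care. Once that is set up, the rest is an essentially mechanical computation using the Bernoulli KL upper bound $\mathrm{KL}(p\,\|\,q) \leq (p-q)^2/(q(1-q))$ for $p,q$ bounded away from $\{0,1\}$, which is where the $\Delta_i^{-2}$ and $\Delta^{-2}$ factors arise.
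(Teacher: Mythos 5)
The paper does not prove this theorem at all: it is imported verbatim (in abridged form) from Mannor and Tsitsiklis, so there is no in-paper argument to compare against. Your sketch is a correct and essentially complete outline of the standard proof. It is worth noting, though, that you are proving it by the modern route --- the transportation lemma of Kaufmann, Capp\'e and Garivier, which packages the change of measure, Wald's identity for the stopped log-likelihood ratio, and the data-processing inequality into the single identity $\mathrm{KL}(\mathbb{P}_\nu,\mathbb{P}_{\nu'})=\sum_a \E_\nu[t_a]\,\mathrm{KL}(\nu_a,\nu'_a)\ge \mathrm{kl}(\delta,1-\delta)$ --- whereas the original 2004 proof predates that lemma and argues more directly, bounding the probability of the error event restricted to the event that arm $i$ was pulled few times via an explicit likelihood-ratio computation. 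The two routes give the same conclusion; yours is cleaner, the original is more self-contained. Two small points you handle correctly but that deserve emphasis: (i) the constants hidden in the $\Omega(\cdot)$ depend on the means being bounded away from $0$ and $1$ (otherwise $\mathrm{KL}(\mathrm{Ber}(p)\,\|\,\mathrm{Ber}(q)) \le (p-q)^2/(q(1-q))$ does not reduce to $O(\Delta_i^2)$), a hypothesis the paper's abridged restatement silently drops; and (ii) the decomposition of the history KL requires $\E_\nu[t_a]<\infty$, which is harmless since otherwise the bound is vacuous. Neither issue is a gap in your argument.
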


The original theorem is stated for the problem of identifying an $\epsilon$-optimal arm. Setting $\epsilon = \Delta/2$ reduces it to best arm identification. As a corollary of this theorem, we obtain the lower bound for our problem.
\begin{corollary}
Consider the best arm identification in linked bandits problem with $n$ arms, with mean rewards $\mu_1, \mu_2, ..., \mu_n$. Let $\E{(T)}$ denote the expected number of plays. Any policy that identifies the best arm with probability at least $1 - \delta$ for all mean rewards, satisfies
\[ \E{[T]} = \Omega\left(\sum_{i \neq i^*}\frac{\mu_i}{\Delta_i^2} + \frac{\mu^* + p}{\Delta^2} \right) \log{\frac{1}{\delta}} \,, \]
where $p = \prod_{i=1}^n(1 - \mu_i)$.
\end{corollary}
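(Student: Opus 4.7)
The plan is to decompose the total number of plays $T$ into two pieces whose expectations can be lower bounded separately. Since each play terminates as soon as some arm returns a $1$, every play contributes at most one $1$-reward. So, letting $Y = \sum_{i=1}^n X_i$ denote the total number of $1$-rewards observed and $Z$ the number of plays that end with all $0$-rewards, we have the clean identity $T = Y + Z$. It then suffices to lower bound $\E[Y]$ and $\E[Z]$ separately and add.

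For $\E[Y]$, I use that $X_i \mid t_i \sim \text{Bin}(t_i, \mu_i)$ (as established earlier in the paper), so by a Wald-style argument $\E[X_i] = \mu_i \E[t_i]$ and hence $\E[Y] = \sum_{i=1}^n \mu_i \E[t_i]$. Substituting the Mannor--Tsitsiklis bounds from Theorem~\ref{lower} -- namely $\E[t_i] = \Omega(\Delta_i^{-2}\log(1/\delta))$ for $i \neq i^*$ and $\E[t_{i^*}] = \Omega(\Delta^{-2}\log(1/\delta))$ -- yields $\E[Y] = \Omega\left(\sum_{i \neq i^*} \mu_i/\Delta_i^2 + \mu^*/\Delta^2\right)\log(1/\delta)$, which supplies the first two summands of the desired bound.

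For $\E[Z]$, the key observation is that in any play, regardless of which subsequence $S$ the (history-dependent) policy selects, the conditional probability that every chosen arm returns $0$ equals $\prod_{i \in S}(1-\mu_i) \geq \prod_{i=1}^n (1-\mu_i) = p$, because every factor lies in $[0,1]$ and $S \subseteq [n]$. A standard optional-stopping style argument -- applying this conditional bound on $\{T \geq j\}$ and summing over $j$ -- then gives $\E[Z] \geq p\,\E[T]$. Combining this with the trivial inequality $\E[T] \geq \E[t_{i^*}] = \Omega(\Delta^{-2}\log(1/\delta))$ (each play samples $a_{i^*}$ at most once, so $t_{i^*} \leq T$) yields $\E[Z] = \Omega(p\Delta^{-2}\log(1/\delta))$, which is the missing $p/\Delta^2$ summand. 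Adding the two lower bounds through $\E[T] = \E[Y] + \E[Z]$ then delivers the corollary.

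The only delicate points I anticipate are in the stopping-time manipulations: one has to carefully justify both $\E[X_i] = \mu_i \E[t_i]$ (since $t_i$ is a random stopping time) and $\E[Z] \geq p\,\E[T]$ (since $T$ is itself policy- and history-dependent). Both reduce to conditioning on the natural filtration of plays and using that the fresh Bernoulli draws within each play are independent of that filtration. I also need to confirm that the Mannor--Tsitsiklis lower bound from Theorem~\ref{lower} transfers to linked bandits; this is harmless because that bound is information-theoretic and only relies on each observation of arm $i$ being an independent $\mathrm{Bernoulli}(\mu_i)$ sample, which continues to hold in the linked setting whenever that arm is actually sampled.
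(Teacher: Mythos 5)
Your proposal is correct and follows essentially the same route as the paper: the same decomposition of $T$ into the count of $1$-rewards plus the count of all-zero plays, the same Wald-type identity $\E[X_i]=\mu_i\E[t_i]$, the same observation that the all-zero probability of any selected subsequence is at least $p=\prod_{i=1}^n(1-\mu_i)$ so that $\E[Z]\ge p\,\E[T]\ge p\,\E[t_{i^*}]$, and the same invocation of the Mannor--Tsitsiklis per-arm lower bounds. If anything, you are more explicit than the paper about the stopping-time justifications, which the paper passes over silently.
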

\begin{proof}
Denoting the number of plays of the linked bandit by $T$, the number of trials of the $i$th arm with reward 1 by $u_i$ and the number of plays with no positive reward by $t_{n+1}$, we have
\[ T = \sum_{i=1}^n u_i + t_{n+1} \]
since each play ends with either exactly one positive reward, or with no positive reward.
Taking expectation on both sides, we get
\begin{align*}
    \E{(T)} &= \sum_{i = 1}^n \E{(u_i)} + \E{(t_{n+1})} \\
     &= \sum_{i = 1}^n \E_{t_i}{\E_{u_i | t_i}{(u_i)}} \\
     &\quad + \E{\left(\sum_{j=1}^T \P{(\text{no positive reward in }j\text{th play})}\right)}
     \end{align*}
 Using the fact that $u_i | t_i \sim \text{Binomial}(t_i, \mu_i)$, we get $\sum_{i = 1}^n \E_{t_i}{\E_{u_i | t_i}{(u_i)}} = \sum_{i = 1}^n \E_{t_i}{(\mu_i t_i)}$. Also, the probability of no positive reward has a minimum value $\prod_{i=1}^n (1 - \mu_i)$, achieved when all arms are selected in a play. Therefore, we get
     \begin{align*}
\E{[T]} &= \sum_{i = 1}^n \E_{t_i}{(\mu_i t_i)} + \E{(T)} \prod_{i= 1}^n (1 - \mu_i) \\
&= \sum_{i = 1}^n \mu_i \E{(t_i)} + \E{(T)} \prod_{i = 1}^n (1 - \mu_i) \\
 &\geq \sum_{i = 1}^n \mu_i \E{(t_i)} + \max_i \E{(t_i)} \prod_{i = 1}^n (1 - \mu_i) \,.
\end{align*} 
The last step follows from the fact that a single play can only sample a particular arm at most once, implying $T \geq t_i$ for all $i$. Combining this with the lower bounds from Theorem~\ref{lower}, we get the desired result.
\end{proof}
Other than the factor $\mu^* + p$ in the last term (and other small terms and factors), this lower bound matches the upper bound derived in Section~\ref{ege} for exponential gap elimination. Moreover, the looseness in one term can be ignored if the number of terms (arms) is large. The $\log{\log{1/\Delta}}$ factor was shown to be necessary by \cite{jamieson2014lil}, by relying on Farrell's optimal test \cite{farrell1964}. \\

\subsection{Unordered linked bandits}
The lower bound that we just derived also holds for the more general problem, where the agent is allowed to re-order the selected arms in a play, i.e., the agent first selects a subset of arms, and then applies an arbitrary permutation. We call this the \emph{permuted linked bandit} setting. Moreover, we already have an optimal algorithm in the ordinary linked bandit setting that achieves this lower bound, and is also a valid algorithm for the permuted setting.
Since the sample complexity for permuted linked bandits is upper bounded by that of ordinary linked bandits, we have discovered a surprising result that ordered and unordered linked bandits have the same complexity.


\section{Experiments}
\begin{figure}
\vskip 0.2in
\begin{center}
\centerline{\includegraphics[trim={0 0 17cm 0cm},clip,width=1.0\columnwidth]{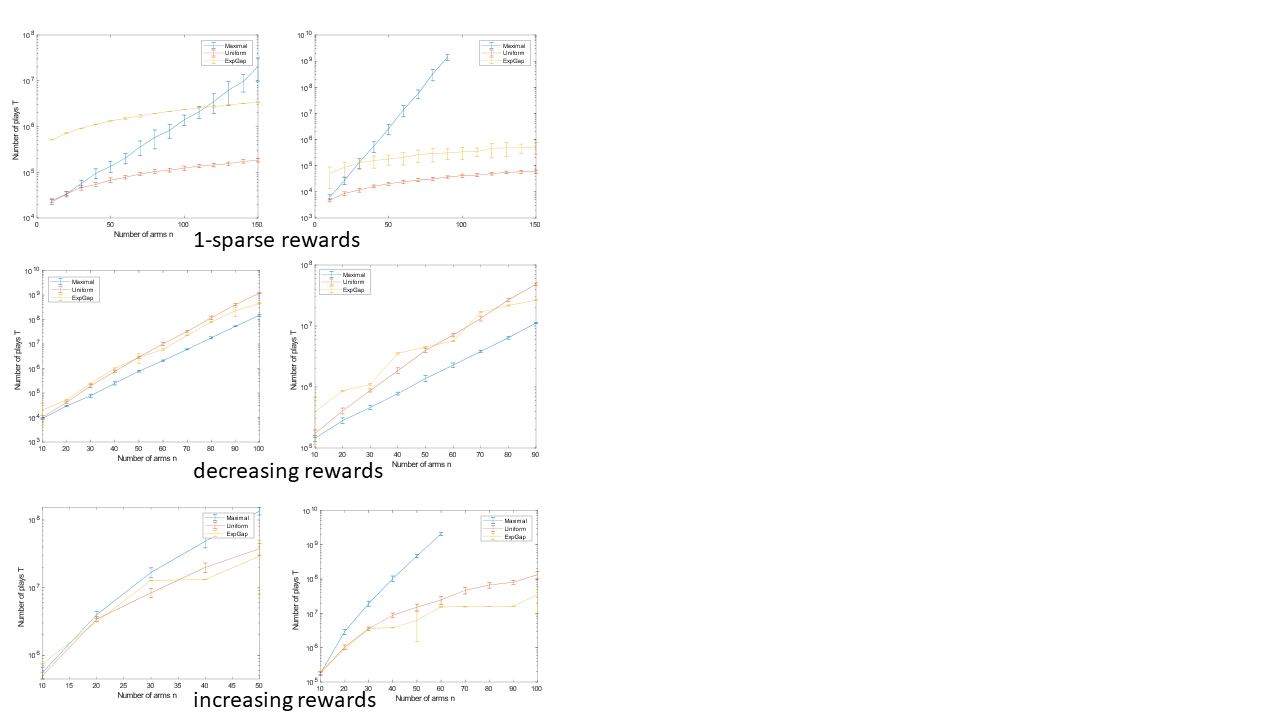}}
\caption{Sample complexities of proposed algorithms across various problem scenarios.}
\label{icml-historical}
\end{center}
\vskip -0.2in
\end{figure}

We conduct an experiment to compare the play complexities of the 3 sampling strategies described in this paper: \texttt{MaximalSampling}, \texttt{UniformSampling} and \texttt{LinkedEGE}. We consider three problem scenarios over increasing number of arms. The ``1-sparse'' scenario sets $\mu^* = 0.1$ and $\mu_i = 0.05$ for $i \neq i^*$. The ``decreasing'' scenario sets $\mu_1 = 0.05$ and $\mu_i = 0.05 - 0.005 * 0.95^{(n-i)/2}$ for $i \neq 1$. The ``increasing'' scenario sets $\mu_i = i / n$ for $1 \leq i \leq n$. Figure ~\ref{icml-historical} shows the comparison. In the 1-sparse scenario (first row), uniform sampling and linked EGE have similar performance, with an optimal dependence on $n$. This is expected since both of them sample arms uniformly in this scenario, which is the optimal strategy here. The decreasing scenario (second row) is constructed such that maximal sampling algorithm will sample arms at a rate proportional to $\Delta_i^{-2}$, making it the optimal strategy. Clearly, maximal sampling dominates the other two in this scenario. While linked EGE seems to be sub-optimal in this case, it is only due to the large constants in its sample complexity. As $n$ increases, it approaches maximal sampling in its performance. In the increasing scenario (third row), linked EGE has the best empirical performance, owing to its gap-dependent sampling strategy. For maximal sampling and uniform sampling, we used the LIL stopping criterion described by \cite{jamieson2014lil}.



\section{Related work}
The standard version of the best arm problem has been extensively studied since the '50s, especially in the last decade. Two settings that are commonly studied: \textit{fixed confidence} and \textit{fixed budget}, in which either the error probability or the number of arm pulls are fixed and the goal is to minimize the other (see \cite{gabillon:hal-00747005} for similarities between the two settings). In the fixed confidence setting, the uniform sampling strategy has been studied as a precursor to optimal strategies, and shown by \cite{even2006action,BUBECK20111832} to achieve a sample complexity of order $n \Delta^{-2} \log{(n/\delta)}$. Based on the classification by \cite{jamieson2014best}, sampling strategies that are optimal or nearly optimal can be broadly divided into two classes: action elimination (AE) and upper confidence bound (UCB) based algorithms. Among AE algorithms, the exponential-gap elimination procedure of \cite{karnin2013almost} guarantees best arm identification with high probability using order $\sum_i \Delta_i^{-2} \log{(1/\delta \log{\Delta_i^{-2}})}$ samples, which is optimal. Among UCB algorithms, lil' UCB \cite{jamieson2014lil} also achieves optimal sample complexity. While AE algorithms proceed in epochs performing uniform sampling of the remaining arms and elimination of sub-optimal arms in each epoch, UCB algorithms use an upper confidence bound to define an ordering for all arms, and choose the single highest arm to sample at each time. Our work shows that AE algorithms can be easily adapted to linked bandits by using an efficient subroutine to perform uniform sampling. In some sense, AE algorithms are agnostic to the order of sampling, which can be chosen to match the fixed arm order of linked bandits. In contrast, the dynamically changing arm order in UCB algorithms makes them less suited for adaption to linked bandits.

Our problem is closely related to the one studied by \cite{kveton2015cascading}, cascading bandits, to model the problem of learning to rank. Both cascading bandits and linked bandits share a similar feedback mechanism, i.e., arms are sampled upto the first non-zero reward. However, there are some key differences. The goal in cascading bandits is to select an ordering of arms that maximizes cumulative reward, or equivalently, minimizes cumulative regret. In linked bandits, the goal is to identify the maximal reward arm with high probability. Therefore, obtaining feedback that discriminates between arms is more important than the reward.

Many other variants of the bandit problem have also been proposed. Of particular relevance are combinatorial bandits (\cite{chen2013combinatorial}), which deal with arms that form certain combinatorial structures, and in each round, a set of arms (\textit{super arm}) are played together. The semi-bandit feedback setting of  (\cite{neu2013efficient}) is similar to linked bandits in that it provides more feedback in a single interaction than standard bandits. However, the key difference is that the feedback here is provided for a stochastic set of arms that can't be chosen by the agent in a deterministic manner.

\section{Discussion and Future work}
We looked at 2 algorithms for best arm identification in the standard MAB problem, and adapted them to the linked MAB problem with good results. The tool which allowed us to do this was a method to perform uniform sampling efficiently.
Since the uniform sampling subroutine is ``optimal'' in terms of sample complexity, combining it with an optimal standard MAB algorithm, namely exponential gap elimination, gives a close to optimal sample complexity for the best arm identification problem in linked bandits.
We saw that our results for linked bandits carry over to the setting where the bandits can be re-ordered arbitrarily. This can be useful in applications like online ad placement which have a linked feedback mechanism.

Finally, the discussion of linked bandits can be extended to consider the problem of identifying the $k$-best arms with linked bandits. This will be especially useful for the search ranking problem, where we are interested in identifying the  top-$k$ search results for a query.

\bibliography{example_paper}

\begin{thebibliography}{14}
\providecommand{\natexlab}[1]{#1}
\providecommand{\url}[1]{\texttt{#1}}
\expandafter\ifx\csname urlstyle\endcsname\relax
  \providecommand{\doi}[1]{doi: #1}\else
  \providecommand{\doi}{doi: \begingroup \urlstyle{rm}\Url}\fi

\bibitem[Bickel et~al.(2009)Bickel, Diggle, Feinberg, Gather, Olkin, and
  Zeger]{bickel2009springer}
Bickel, P., Diggle, P., Feinberg, S., Gather, U., Olkin, I., and Zeger, S.
\newblock \emph{Springer Series in Statistics}.
\newblock Springer, 2009.

\bibitem[Bubeck et~al.(2011)Bubeck, Munos, and Stoltz]{BUBECK20111832}
Bubeck, S., Munos, R., and Stoltz, G.
\newblock Pure exploration in finitely-armed and continuous-armed bandits.
\newblock \emph{Theoretical Computer Science}, 412\penalty0 (19):\penalty0 1832
  -- 1852, 2011.
\newblock ISSN 0304-3975.
\newblock \doi{https://doi.org/10.1016/j.tcs.2010.12.059}.
\newblock URL
  \url{http://www.sciencedirect.com/science/article/pii/S030439751000767X}.
\newblock Algorithmic Learning Theory (ALT 2009).

\bibitem[Chen et~al.(2013)Chen, Wang, and Yuan]{chen2013combinatorial}
Chen, W., Wang, Y., and Yuan, Y.
\newblock Combinatorial multi-armed bandit: General framework and applications.
\newblock In \emph{International Conference on Machine Learning}, pp.\
  151--159, 2013.

\bibitem[Combes(2015)]{combes2015extension}
Combes, R.
\newblock An extension of mcdiarmid's inequality.
\newblock \emph{arXiv preprint arXiv:1511.05240}, 2015.

\bibitem[Even-Dar et~al.(2006)Even-Dar, Mannor, and Mansour]{even2006action}
Even-Dar, E., Mannor, S., and Mansour, Y.
\newblock Action elimination and stopping conditions for the multi-armed bandit
  and reinforcement learning problems.
\newblock \emph{Journal of machine learning research}, 7\penalty0
  (Jun):\penalty0 1079--1105, 2006.

\bibitem[Farrell(1964)]{farrell1964}
Farrell, R.~H.
\newblock Asymptotic behavior of expected sample size in certain one sided
  tests.
\newblock \emph{Ann. Math. Statist.}, 35\penalty0 (1):\penalty0 36--72, 03
  1964.
\newblock \doi{10.1214/aoms/1177703731}.
\newblock URL \url{https://doi.org/10.1214/aoms/1177703731}.

\bibitem[Gabillon et~al.(2012)Gabillon, Ghavamzadeh, and
  Lazaric]{gabillon:hal-00747005}
Gabillon, V., Ghavamzadeh, M., and Lazaric, A.
\newblock {Best Arm Identification: A Unified Approach to Fixed Budget and
  Fixed Confidence}.
\newblock Research report, October 2012.
\newblock URL \url{https://hal.inria.fr/hal-00747005}.

\bibitem[Jamieson \& Nowak(2014)Jamieson and Nowak]{jamieson2014best}
Jamieson, K. and Nowak, R.
\newblock Best-arm identification algorithms for multi-armed bandits in the
  fixed confidence setting.
\newblock In \emph{Information Sciences and Systems (CISS), 2014 48th Annual
  Conference on}, pp.\  1--6. IEEE, 2014.

\bibitem[Jamieson et~al.(2014)Jamieson, Malloy, Nowak, and
  Bubeck]{jamieson2014lil}
Jamieson, K., Malloy, M., Nowak, R., and Bubeck, S.
\newblock lil’ucb: An optimal exploration algorithm for multi-armed bandits.
\newblock In \emph{Conference on Learning Theory}, pp.\  423--439, 2014.

\bibitem[Karnin et~al.(2013)Karnin, Koren, and Somekh]{karnin2013almost}
Karnin, Z., Koren, T., and Somekh, O.
\newblock Almost optimal exploration in multi-armed bandits.
\newblock In \emph{International Conference on Machine Learning}, pp.\
  1238--1246, 2013.

\bibitem[Kveton et~al.(2015)Kveton, Szepesvari, Wen, and
  Ashkan]{kveton2015cascading}
Kveton, B., Szepesvari, C., Wen, Z., and Ashkan, A.
\newblock Cascading bandits: Learning to rank in the cascade model.
\newblock In \emph{International Conference on Machine Learning}, pp.\
  767--776, 2015.

\bibitem[Mannor \& Tsitsiklis(2004)Mannor and Tsitsiklis]{mannor2004sample}
Mannor, S. and Tsitsiklis, J.~N.
\newblock The sample complexity of exploration in the multi-armed bandit
  problem.
\newblock \emph{Journal of Machine Learning Research}, 5\penalty0
  (Jun):\penalty0 623--648, 2004.

\bibitem[Neu \& Bart{\'o}k(2013)Neu and Bart{\'o}k]{neu2013efficient}
Neu, G. and Bart{\'o}k, G.
\newblock An efficient algorithm for learning with semi-bandit feedback.
\newblock In \emph{International Conference on Algorithmic Learning Theory},
  pp.\  234--248. Springer, 2013.

\bibitem[Shin et~al.(2015)Shin, Kirmani, Goyal, and Shapiro]{shin2015photon}
Shin, D., Kirmani, A., Goyal, V.~K., and Shapiro, J.~H.
\newblock Photon-efficient computational 3-d and reflectivity imaging with
  single-photon detectors.
\newblock \emph{IEEE Transactions on Computational Imaging}, 1\penalty0
  (2):\penalty0 112--125, 2015.

\end{thebibliography}





\nocite{combes2015extension}
\nocite{bickel2009springer}
\bibliographystyle{icml2019}
\clearpage
\onecolumn
\renewcommand{\figurename}{Supplementary Figure}
\renewcommand{\thesection}{A.}
\renewcommand{\theequation}{\arabic{equation}}
\setcounter{figure}{0}
\setcounter{section}{0}
\setcounter{equation}{0}
\setcounter{page}{1}
\newtheorem{lemma}[theorem]{Lemma}


\section{Proof of Theorem 3.1.}
In this section, we'll derive an upper bound on the sample complexity of the \texttt{MaximalSampling} strategy for linked bandits. \\
{\bf Notation: }Let $T$ denote the number of plays. Let $X_j \in \{0,1\}^n$, $ 1 \leq j \leq T$ denote 1 hot reward vectors for the whole sequence of arms selected in the $j$th play, extended with $0$s for the arms that did not get sampled. Let $\mu_i, 1 \leq i \leq n$ denote the true means.\\
We define the mean estimates as
\[ \hat \mu_{i,T} = \frac{\sum_{j=1}^T X_j[i]}{T - \sum_{k=1}^{i-1}\sum_{j=1}^T X_j[k]} \,.\]
Note that this is the same definition as earlier, except that the form is more amenable to application of McDiarmid's inequality. The algorithm outputs the arm $\hat i = \argmax_i{\{\hat \mu_{i,T}\}}$. Therefore, we want to bound the probability of error:
\[ \mathbb{P}\left\{\max_{i:\Delta_i > 0} \hat \mu_{i,T} \geq \hat \mu_{i^*, T}\right\} \]
Define $f(X_1, X_2, ..., X_n) = \max_{i:\Delta_i > 0} \hat \mu_{i,T} - \hat \mu_{i^*, T} $. $f$ is a function of independent random variables, and it is easy to see that $f$ has bounded differences:
\[ |f(x_1, ... , x_n) - f(x_1, ..., x_{i-1}, x_i^{\prime}, x_{i+1}, ..., x_n) | \leq \frac{1}{b} \]
where $b = T - \sum_{k=1}^{n-1}\sum_{j=1}^T X_j[i]$, the smallest denominator amongst all the quotients. Therefore, $f$ can be shown to satisfy a concentration inequality. 
\begin{lemma}
\label{lemma1}
For every $\epsilon > 0$ and $c > \frac{1}{Tp}$, let $q =\exp{\left(- \frac{2( p T c - 1)^2}{T c^2}\right)}$. Then
\[ \mathbb{P} \{ f(X) - \mathbb{E}\{f(X | \Theta\} \geq \epsilon \} \leq q + \exp{\left(- \frac{2(\epsilon - q T c)^2}{T c^2}\right)} \]
where $\Theta$ represents the event $1/b < c$, provided $\epsilon > q T c $.
\end{lemma}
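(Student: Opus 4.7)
The plan is to split the deviation event by the typical event $\Theta = \{1/b < c\}$, bounding $\mathbb{P}(\bar\Theta)$ directly by Hoeffding and controlling the remaining probability via a McDiarmid-type argument that exploits the fact that on $\Theta$ the function $f$ has coordinate-wise bounded differences with constant at most $c$.

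To bound $\mathbb{P}(\bar\Theta)$, I would rewrite $b = T - \sum_{k=1}^{n-1}\sum_{j=1}^T X_j[k] = \sum_{j=1}^T Y_j$, where $Y_j$ is the indicator that none of the arms $a_1, \dots, a_{n-1}$ yielded a positive reward in play $j$. Plays are independent, and each $Y_j \sim \mathrm{Bernoulli}(p)$ with $p = \prod_{i=1}^{n-1}(1 - \mu_i)$, so $b \sim \mathrm{Binomial}(T, p)$. The hypothesis $c > 1/(Tp)$ makes $Tp - 1/c > 0$, and Hoeffding then gives
\[
\mathbb{P}(\bar\Theta) \;=\; \mathbb{P}\{b \leq 1/c\} \;\leq\; \exp\!\left(-\frac{2(Tp - 1/c)^2}{T}\right) \;=\; q.
\]

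For the remaining mass I would use the union-bound decomposition
\[
\mathbb{P}\{f(X) - \mathbb{E}[f(X)|\Theta] \geq \epsilon\} \;\leq\; q + \mathbb{P}\big(\{f(X) - \mathbb{E}[f(X)|\Theta] \geq \epsilon\} \cap \Theta\big),
\]
and handle the second term via a surrogate $\tilde f$ that agrees with $f$ on $\Theta$, is modified on $\bar\Theta$ so as to satisfy the deterministic bounded-differences inequality $|\tilde f(x) - \tilde f(x')| \leq c$ on every coordinate-neighbor pair, and whose unconditional expectation is close to $\mathbb{E}[f|\Theta]$. Since any two points in the product space can be joined by at most $T$ single-coordinate swaps each changing $\tilde f$ by at most $c$, and since $\bar\Theta$ has mass at most $q$, this should yield $|\mathbb{E}[\tilde f] - \mathbb{E}[f|\Theta]| \leq qTc$. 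Applying the standard McDiarmid inequality to $\tilde f$ (whose arguments remain independent) then gives
\[
\mathbb{P}\{\tilde f(X) - \mathbb{E}[\tilde f(X)] \geq \epsilon - qTc\} \;\leq\; \exp\!\left(-\frac{2(\epsilon - qTc)^2}{Tc^2}\right),
\]
which is meaningful only under the stated assumption $\epsilon > qTc$. Combining this with the inclusion $\{f - \mathbb{E}[f|\Theta] \geq \epsilon\} \cap \Theta \subseteq \{\tilde f - \mathbb{E}[\tilde f] \geq \epsilon - qTc\}$ closes the argument.

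The main obstacle is building the surrogate $\tilde f$ carefully: the deterministic bounded-differences inequality must hold across neighbor pairs where one lies in $\Theta$ and the other in $\bar\Theta$, while simultaneously keeping the expectation shift exactly of order $qTc$ rather than something larger. A clean way to execute this is to $c$-Lipschitz extend $f|_\Theta$ to the whole product space, or equivalently to invoke a ``McDiarmid with stochastic Lipschitz constant'' result (a typical-bounded-differences inequality). The bookkeeping of the shift between the unconditional mean of the surrogate and the $\Theta$-conditional mean of $f$ is precisely what forces the $qTc$ correction appearing in the final bound.
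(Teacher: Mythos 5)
Your proposal is correct and follows essentially the same route as the paper: the paper likewise observes that $b \sim \mathrm{Bin}(T,p)$ and applies Hoeffding's inequality to get $\mathbb{P}\{1/b > c\} < q$, then concludes by invoking the extension of McDiarmid's inequality to functions whose bounded-differences property holds only with high probability \cite{combes2015extension}. The only difference is that you sketch a proof of that extension yourself (via a $c$-Lipschitz surrogate and the $qTc$ mean-shift bookkeeping, which is indeed how such results are proven), whereas the paper cites it as a black box.
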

\begin{proof}
Now, b is binomially distributed as $b = B\left(T, p\right)$ with $p = \prod_{i=1}^{n-1} (1 - \mu_i)$. Therefore, we can bound $b$ w.h.p around its mean using Hoeffding's inequality:
\[ \mathbb{P}\{b < Tp(1 - \delta)\} < e^{-2 T p^2 \delta^2} \]
where $0 < \delta < 1$.
\[ \implies \mathbb{P}\left\{\frac{1}{b} > c \right\} < q \]
where $c = \frac{1}{(1-k)Tp}$.
Therefore, we can use McDiarmid's inequality for differences bounded w.h.p. due to \cite{combes2015extension} to get the desired result.
\end{proof}
We next find an upper bound on $\mathbb{E}\{f(X) | \Theta\}$.
\begin{lemma}
\label{lemma2}
\[ \mathbb{E}\{f(X) | \Theta\} < \sqrt{{c\ln n}} - \Delta \]
\end{lemma}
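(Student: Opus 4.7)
The plan is to reduce $\mathbb{E}\{f(X) \mid \Theta\}$ to a maximum of (conditionally) sub-Gaussian random variables. First I would write
\[
\mathbb{E}\{f(X) \mid \Theta\} = \mathbb{E}\{\max_{i:\Delta_i>0}\hat\mu_{i,T} \mid \Theta\} - \mathbb{E}\{\hat\mu_{i^*,T} \mid \Theta\}.
\]
For the second term, the unconditional expectation equals $\mu^*$ by the identity $\mathbb{E}\{\hat\mu_{i,T}\} = \mu_i$ already established earlier in the paper, and conditioning on the high-probability event $\Theta$ perturbs it by only $O(\mathbb{P}(\Theta^c))$; I will absorb such corrections into the slack between my derived bound and the target $\sqrt{c\ln n}-\Delta$.

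For the first term, I would establish that, under $\Theta$, each $\hat\mu_{i,T}$ is sub-Gaussian with variance proxy $c/4$. The key observation is that in \texttt{MaximalSampling}, whether arm $i$ is sampled in any play depends only on the Bernoulli outcomes of arms $1,\ldots,i-1$ and not on arm $i$'s own coin flips; hence conditional on the denominator $t_i$, the numerator is a sum of $t_i$ i.i.d.\ Bernoulli$(\mu_i)$ variables. Hoeffding's lemma then gives $\mathbb{E}\{\exp(\lambda(\hat\mu_{i,T}-\mu_i)) \mid t_i\} \leq \exp(\lambda^2/(8 t_i))$, and since $\Theta$ forces $t_i \geq 1/c$ for every $i$ (because $b$ is the minimum denominator), the tower property yields the conditional MGF bound $\exp(\lambda^2 c/8)$ under $\Theta$.

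Next I apply the standard maximal inequality for sub-Gaussian variables, which requires only individual sub-Gaussianity and not joint independence: for each $\lambda > 0$,
\[
\exp(\lambda \mathbb{E}\{\max_{i:\Delta_i>0}(\hat\mu_{i,T}-\mu_i) \mid \Theta\}) \leq \sum_{i:\Delta_i>0}\mathbb{E}\{\exp(\lambda(\hat\mu_{i,T}-\mu_i)) \mid \Theta\} \leq n\exp(\lambda^2 c/8),
\]
and optimizing over $\lambda$ gives $\mathbb{E}\{\max_{i:\Delta_i>0}(\hat\mu_{i,T}-\mu_i) \mid \Theta\} \leq \sqrt{c\ln n/2}$. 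Combining with $\max_{i:\Delta_i>0}\mu_i = \mu^*-\Delta$ and the second term produces $\mathbb{E}\{f(X) \mid \Theta\} \leq \sqrt{c\ln n/2}-\Delta$, comfortably below the target $\sqrt{c\ln n}-\Delta$.

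The main obstacle is making the conditional-expectation manipulations watertight, since $\Theta$ is defined through $b$ and so correlates with the rewards of arms $1,\ldots,n-1$, potentially shifting the conditional mean of each $\hat\mu_{i,T}$ away from $\mu_i$. I would handle this by writing $\mathbb{E}\{Y \mid \Theta\} = \mathbb{E}\{Y\mathbbm{1}_\Theta\}/\mathbb{P}(\Theta)$, using $\mathbb{P}(\Theta^c) \leq q$ from the Hoeffding argument already used for Lemma~\ref{lemma1}, and absorbing the resulting $O(q)$ corrections into the generous gap between $\sqrt{c\ln n/2}$ and $\sqrt{c\ln n}$ in the target bound.
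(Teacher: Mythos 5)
Your proof is correct in substance and arrives at the same destination as the paper---conditional sub-Gaussianity of the empirical means under $\Theta$ followed by the MGF maximal inequality---but it differs in two genuine ways. First, the decomposition: you split $\mathbb{E}\{f\mid\Theta\}$ into $\mathbb{E}\{\max_{i}\hat\mu_{i,T}\mid\Theta\}-\mathbb{E}\{\hat\mu_{i^*,T}\mid\Theta\}$ and center each maximand at its own mean, whereas the paper keeps the pairs together, defining $Z_i=(\mu_{i^*}-\hat\mu_{i^*})+(\hat\mu_i-\mu_i)$ and bounding $\mathbb{E}\{\max_i Z_i\mid\Theta\}$. Your version requires a separate lower bound on $\mathbb{E}\{\hat\mu_{i^*,T}\mid\Theta\}$, costing an $O(q)$ correction, but it halves the variance proxy and yields $\sqrt{c\ln n/2}$ where the paper's own computation yields $2\sqrt{2c\ln n}$ (the paper's proof in fact overshoots the constant in the lemma statement; yours does not). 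Second, the conditioning on $\Theta$: the paper controls $\mathbb{E}\{e^{sZ_i}\mid\Theta\}$ through a chain of conditional MGFs over $\hat\mu_1,\dots,\hat\mu_{i-1}$, the binomial MGF, and an asserted-but-unproved claim that conditioning on $\Theta$ only shrinks the MGF because $\hat\mu_i$ is negatively correlated with $b$; you instead write $\mathbb{E}\{Y\mid\Theta\}=\mathbb{E}\{Y\mathbbm{1}_\Theta\}/\P(\Theta)$, exploit $\mathbbm{1}_\Theta\le\mathbbm{1}\{t_i\ge 1/c\}$ (since $t_i\ge b$) together with nonnegativity of the integrand and Hoeffding's lemma conditional on $t_i$ alone, and absorb the $1/(1-q)$ renormalization. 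This sidesteps the negative-correlation step entirely and is, if anything, easier to make rigorous. The one step you should spell out is that the ``tower property under $\Theta$'' must be routed through the indicator bound rather than by conditioning the inner expectation on both $t_i$ and $\Theta$ (the latter is precisely where the correlation you flag would bite), and you should verify that the accumulated $O(q)$ terms leave you below $\sqrt{c\ln n}-\Delta$, which holds for the parameter settings used in Theorem 3.1 where $q$ is exponentially small.
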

\begin{proof}
Define $Z_i = \hat \mu_i - \hat \mu_{i^*} + \Delta_i =(\mu_{i^*} - \hat \mu_{i^*}) + (\hat \mu_i - \mu_i)$. Note that we have dropped the $T$ subscript from $\hat \mu$ for clarity. With this definition, we have
\begin{align}
\label{first}
\mathbb{E}\{f(X) | \Theta\} &= \max_{i: \Delta_i > 0} \left( - \Delta_i + \mathbb{E} \{ Z_i | \Theta \} \right) \nonumber \\
    &\leq - \min_{i: \Delta_i > 0} \Delta_i + \mathbb{E} \{ \max_{i: \Delta_i > 0} Z_i | \Theta \}
\end{align}
Assuming $i > i^*$ (the other case can be analyzed similarly), we get for all $s > 0$
\begin{align} \label{apeq1}
\mathbb{E} \{ e^{s Z_i} | \Theta \} &= \mathbb{E} \left[ e^{s ( \mu_{i^*} - \hat \mu_{i^*})} e^{s (\hat \mu_{i} - \mu_{i})} | \Theta \right] \nonumber \\
&= \mathbb{E} \left[ e^{s ( \mu_{i^*} - \hat \mu_{i^*})} \mathbb{E} \{ e^{s (\hat \mu_{i} - \mu_{i})} | \hat \mu_{1 ... i^*}, \Theta \} | \Theta \right]
\end{align}
using the law of total expectation, where $\hat \mu_{1 ... j}$ denotes $\hat \mu_1, \hat \mu_2, ... , \hat \mu_j$.
Let's look at the second term inside the outer expectation. Again using the law of total expectation, we get
\begin{align}
\label{total}
\mathbb{E} \{e^{s (\hat \mu_{i} - \mu_{i})} | \hat \mu_{1 ... i^*}, \Theta\} &= \mathbb{E}_{\hat \mu_{1 ... {i-1}} | \hat \mu_{1 ... i^*}, \Theta} \left[\mathbb{E}_{\hat \mu_i | \hat \mu_{1 ... {i-1}}, \Theta} \{e^{s (\hat \mu_{i} - \mu_{i})} \}\right] \nonumber \\
 &< \mathbb{E}_{\hat \mu_{1 ... {i-1}} | \hat \mu_{1 ... i^*}, \Theta} \left[\mathbb{E}_{\hat \mu_i | \hat \mu_{1 ... {i-1}}} \{e^{s (\hat \mu_{i} - \mu_{i})} \}\right] 
\end{align}
where the last step uses the fact that conditioning on $\Theta$ reduces the probability of large $\hat \mu_i$, since $\hat \mu_i$ is negatively correlated with $b$.\\
Now, $\hat \mu_i | \hat \mu_{1 ... {i-1}} \sim \text{Bin}(b_{i-1}, \mu_i)$, where $b_{i-1} = T \prod_{k=1}^{i-1} (1 - \hat \mu_i)$. Using the expression for the moment generation function of the binomial distribution, we get
\begin{align*} \mathbb{E}_{\hat \mu_i | \hat \mu_{1 ... {i-1}}} \{e^{s (\hat \mu_{i} - \mu_{i})} \} &= \mathbb{E}_{ \text{Bin}(b_{i-1}, \mu_i)} \{e^{s (\hat \mu_{i} - \mu_{i})} \}\\
&= \exp{\left\{\left((e^{s/b_{i-1}} - 1)b_{i-1} - s\right)p_i\right\}} \\
&< \exp{\left\{s^2 p_i / b_{i-1}\right\}}
\end{align*}
for $s \leq b_{i-1}$. Combining with Eq.~\ref{total}, we get
\begin{align*}
\label{mui}
 \mathbb{E} \{e^{s (\hat \mu_{i} - \mu_{i})} | \hat \mu_{1 ... i^*}, \Theta\} &< \mathbb{E}_{\hat \mu_{1 ... {i-1}} | \hat \mu_{1 ... i^*}, \Theta} \left[ e^{s^2 p_i / b} \right] < e^{s^2 {p_i c}}  \leq e^{{s^2 c}} \,,
 \end{align*}
since $1/b_{i-1} \leq 1/b < c$ conditioned on $\Theta$.
By similar reasoning, we get
\[ \mathbb{E} \left[ e^{s ( \mu_{i^*} - \hat \mu_{i^*})} | \Theta \right] < e^{{s^2 c}} \]
Combining the above with Eq.~\ref{apeq1}, we get:
\[ \mathbb{E} \{ e^{s Z_i} | \Theta \} < e^{{2 s^2 c} } \]
for $s \leq 1/c$.
Thus, $Z_i$ are sub-gaussian. Using the inequality for max of sub-gaussian random variables \cite[Chapter~2]{bickel2009springer}, we get:
\[ \mathbb{E} \left[ \max_{i=1,...,n} Z_i | \Theta \right] < 2\sqrt{{2c \ln{n}}} \]
provided $c \ln{n} \leq 2$.
Finally, combining this with Eq.~\ref{first} completes the proof.
\end{proof}
Setting
\[ \epsilon =  \Delta - 2\sqrt{2 c \ln{n}} \]
in Lemma \ref{lemma1} and combining with Lemma \ref{lemma2}, we get:
\[ \mathbb{P} \left\{ \max_{i: \Delta_i > 0} \hat \mu_i \geq \hat \mu_{i^*} \right\} \leq q + \exp{\left(- \frac{2(\Delta - 2\sqrt{2 c \ln{n}} - q T c)^2}{T c^2}\right)}   \]
Setting $c = \frac{1 + \Delta}{T p}$ and $T = \frac{1}{\Delta^2 p^2} \log{\frac{n}{\Delta p \delta}}$, we get
\[ \mathbb{P} \left\{ \max_{i: \Delta_i > 0} \hat \mu_i \geq \hat \mu_{i^*} \right\} \leq \delta \,.\]

\end{document}